\documentclass[conf]{new-aiaa}

\usepackage[utf8]{inputenc}
\usepackage{algorithm}
\usepackage{graphicx}
\usepackage{amsmath}
\usepackage[version=4]{mhchem}
\usepackage{siunitx}
\usepackage{algpseudocode}
 \usepackage{amsthm}
\usepackage{subfigure}
 \theoremstyle{definition}

\DeclareUnicodeCharacter{221E}{\ensuremath{\infty}}

\newtheorem{proposition}{Proposition}
\usepackage{longtable,tabularx}
\setlength\LTleft{0pt} 
\setcounter{MaxMatrixCols}{25}
\usepackage{mathtools}
\newcommand{\floor}[1]{\left \lfloor #1 \right \rfloor} 

\title{Fast and Safe Aerial Payload Transport in Urban Areas}

\author{Aeris El Asslouj \footnote{Student, Electrical and Computer Engineering Department, University of Arizona, Email: aymaneelasslouj@email.arizona.edu.}, Harshvardhan Uppaluru \footnote{PhD Student, Aerospace and Mechanical Engineering Department, University of Arizona, Email: huppaluru@email.arizona.edu.} and Hossein Rastgoftar \footnote{Assistant Professor, Aerospace and Mechanical Engineering Department, University of Arizona, Email: hrastgoftar@arizona.edu.}}

\begin{document}

\maketitle

\begin{abstract}
This paper studies the problem of fast and safe  aerial payload transport by a single quadcopter in urban areas. The quadcopter payload system (QPS) is considered as a rigid body and modeled with a nonlinear dynamics. The urban area is modeled as an obstacle-laden environment with obstacle geometries obtained by incorporating realistic LIDAR data. Our approach for payload transport is decomposed into high-level motion planning and low-level trajectory control. For the low-level trajectory tracking, a feedback linearization control is applied to stably track the desired trajectory of the quadcopter. For high-level motion planning, we integrate A* search and polynomial planning to define a safe trajectory for the quadcopter assuring collision avoidance, boundedness of the quadcopter rotor speeds and tracking error, and fast arrival to a target destination from an arbitrary initial location. 
\end{abstract}

\section{Nomenclature}

{\renewcommand\arraystretch{1.0}
\noindent\begin{longtable*}{@{}l @{\quad=\quad} l@{}}
$s_i$  & Angular speed of rotor $i$ ($i=1,2,3,4$) \\
$s_{max}$  &Maximum rotor speed\\
$\mathbf{r}_i$ &    Initial position \\
$\mathbf{r}_f$& Target position \\
$\mathbf{r}(t)$& Actual trajectory \\
$\mathbf{p}(t)$& Desired trajectory \\
$\delta$& Tracking error upper bound \\
$\phi$, $\theta$, $\psi$& Roll, pitch, and yaw angles of the quadcopter \\
$m$& Quadcopter mass \\
$\mathbf{J}$& Mass moment of inertia of the quadcopter \\
$\mathbf{S}$& Rotation matrix \\
$\hat{\mathbf{e}}_1$, $\hat{\mathbf{e}}_2$, $\hat{\mathbf{e}}_2$& Bases of the inertial coordinate system \\
$\hat{\mathbf{b}}_1$, $\hat{\mathbf{b}}_2$, $\hat{\mathbf{b}}_2$& Bases of the quadcopter body coordinate system \\
$\boldsymbol{\omega}$& Quadcopter angular velocity\\
\end{longtable*}}

\section{Introduction}

Over the past two decades, quadcopters has become increasingly affordable and widely used for military and non-military applications due to their high performance, maneuverability and dynamic simplicity. Applications of quadcopters include remote sensing, firefighting, traffic surveillance tasks \cite{puri2005survey, kanistras2013survey}, search and rescue operations \cite{surmann2019integration, polka2017use, tomic2012toward}, wildlife monitoring and exploration \cite{witczuk2018exploring} and educational purposes. Aerial payload transportation is one such interesting application commonly used in construction, military response, emergency response, and delivery tasks \cite{mathew2015planning, arbanas2016aerial}. Typically, for cases when the payload is not heavy, a single quadcopter can carry the payload via a single cable attached between quadcopter and payload. This is particularly useful in remote areas with uneven terrains where it is difficult to secure a safe landing place.

\subsection{Related Work}

Previously, the area of payload transport has been extensively studied for helicopters \cite{cicolani1995simulation, bernard2009generic}. The dynamics, stabilization, and control of a payload carrying helicopter were modeled in \cite{pounds2012stability, bernard2009generic, oh2006dynamics}. So far, single quadcopter or multiple quadcopters have been considered for payload transport and deployment \cite{palunko2012agile, michael2011cooperative, maza2009multi, mellinger2013cooperative, rastgoftar2018cooperative, acosta2020accurate} due to their high thrust generation capabilities. Generally, there are two means of payload transportation carried out by quadcopters, i.e., active and passive attachments, each having their own advantages and disadvantages. The passive approach uses a suspended cable \cite{tang2015mixed, yang2019energy, sreenath2013trajectory} with one end attached to the quadcopter and the load is attached to the other end. However, this approach is not feasible in outdoor environments and the controller design becomes more complex due to additional degree of under-actuation. Since most passive approaches are based on the common assumption that the suspended cable is always taut, their applications are restricted. Active approach adds an additional degree of freedom that requires a gripper to grasp at the payload and provides a better solution especially in constrained altitudes.

Aerial payload transportation and manipulation using a single quadcopter was studied \cite{kim2013aerial, guerrero2015passivity, guo2017mixed, goodarzi2016autonomous, yang2019energy}. 
Stabilization of a single quadcopter carrying a single payload was also analyzed \cite{ sreenath2013geometric, goodarzi2014geometric},  Researcher have proposed H$^\infty$ control \cite{guo2017mixed}, and PID control \cite{barawkar2017admittance} for a quadcopter carrying a suspended payload. A quadcopter carrying payload with varying length cable was studied \cite{ goodarzi2016autonomous}. For emergency response, commercial and military applications, cooperative aerial payload transportation and manipulation has been considered \cite{mellinger2011design, kim2013aerial, michael2009kinematics, sreenath2013dynamics}


\subsection{Contributions and Outline}

This paper proposes a multi-layer approach for safe and fast transportation of an aerial payload carried by a single quadcopter in an urban area. We assume that the quadcopter and payload together act as a rigid body and is known as quadcopter payload system (QPS). QPS is modeled by $14$-th order nonlinear dynamics whereas the environment is modeled using geometry from LIDAR data. 
We tested our algorithm and controller in a simulation with a sample payload transport mission through the University of Arizona and presented the results in \ref{sec:simulation}. We used The United States Geological Survey (USGS)'s Lidar data to create the simulation environment in a manner that is widely applicable in the United States as the USGS's Lidar data covers most of the country. Compared with the existing literature, our proposed payload transportation solution offers the following contributions:
\begin{enumerate}
    \item high-level motion planning that integrates A* search and polynomial planning to obtain safe trajectory of the QPS minimizing travel distance from an initial position to a target destination.
    \item low-level trajectory tracking control ensuring stability and boundedness of rotor angular speeds and tracking error in a general payload transportation mission in an obstacle-laden environment with arbitrary distributions of obstacles.
\end{enumerate}

\bigskip

This paper is organized as follows: The problem statement is defined in Section \ref{sec:problemstatement}. Section \ref{sec:modeling} describes the model of the environment and the quadrotor payload system. Section \ref{sec:control} discusses the mathematical modeling of quadcopters and trajectory tracking control. Section \ref{sec:planning} presents the high-level motion planning approach used for simulations. We finally present our simulation results using the described model and control in Section \ref{sec:simulation} before putting forward our concluding remarks in Section \ref{sec:conclusion}.

\section{Problem Statement}
\label{sec:problemstatement}
We consider a quadcopter carrying a payload in an urban area with given initial position $\mathbf{r}_i$ and target position $\mathbf{r}_f$. The QPS is enclosed by a sphere of radius $\epsilon$ and follows the nonlinear dynamics presented in Section \ref{sec:Quadcopter-payload system model}. The environment is made up of free space and obstacle space (i.e terrain and structures) as described in Section \ref{sec:Environment model}.

This paper develops a pair of algorithms to choose a valid desired trajectory $\mathbf{p}(t)$ in Section \ref{sec:planning}. They include A*-based algorithm for spatial planning to keep desired trajectory $\mathbf{p}(t)$ collision-free, and a bi-section-based algorithm for temporal planning of $\mathbf{p}(t)$ to minimize mission time. Then, a feedback linearization controller will be applied in Section \ref{sec:control} to compute rotor speeds that allow the system to track the desired trajectory while respecting the following safety conditions:

\textbf{Bounded Rotor Speed:} The rotor angular speeds of the quadcopter, denoted by $s_1$ through $s_4$ need to satisfy the safety requirement
\begin{equation}\label{boundedrotor}
    \bigwedge_{j=1}^4\left(0\leq s_j\left(t\right)\leq s_{max}\right),\qquad \forall t,
\end{equation}
where $s_{max}$ is the maximum angular speed for all quadcopter rotors, and ``$\bigwedge$''
 means ``include all''.

\bigskip
 
\textbf{Bounded Trajectory Tracking:} It is required to assure that the tracking error remains bounded at any time $t$. This condition can be formally specified by
\begin{equation}\label{boundedcontrol}
   \|\mathbf{r}(t)-\mathbf{p}(t)\|\leq \delta,\qquad \forall t,
\end{equation}
where $\delta$ is the tracking bound and $\mathbf{r}(t)$ is the actual trajectory of the QPS. 

\bigskip

\textbf{Obstacle Collision Avoidance Guarantee:} It is required to guarantee that the QPS does not hit any structure or terrain in the urban area. This is  formally specified as
\begin{equation}\label{obstacleavoidance}
   \|\mathbf{r}(t)-\mathbf{o}\| > \epsilon,\qquad \forall \mathbf{o} \in \mathbf{Obstacle\ space},\qquad \forall t,
\end{equation}
where $\mathbf{Obstacle\ space}$ is the space occupied by either the terrain or structures.

\bigskip

\section{Modeling}
\label{sec:modeling}
We first define the model of the environment in Section \ref{sec:Environment model} and the model of the QPS in Section \ref{sec:Quadcopter-payload system model}. In Section \ref{sec:Quadcopter-payload system model}, we also derive a set of formulas to compute desired rotor speeds that guarantee a desired thrust and a desired Euler acceleration.

\subsection{Environment model}
\label{sec:Environment model}

The environment is modeled as a continuous three-dimensional space $\mathbf{Environment}$ with coordinate system $\left(x,y,z\right)$. The environment is split by an elevation map $M$ into $\mathbf{Free\ space}$ above it and $\mathbf{Obstacle\ space}$ under it. This representation is easier to process and only requires altitude data but does not allow for crossing under structures. It can be formalized as:

\begin{subequations}
\begin{equation}
M : (x,y) \rightarrow \text{highest terrain or structure altitude at (x,y)}, \qquad \forall (x,y,z) \in \mathbf{Environment},
\end{equation}
\begin{equation}
\mathbf{Obstacle\ space} = \left\{(x,y,z) \in \mathbf{Environment} \ \vert \ z < M(x,y)\right\},
\end{equation}
\begin{equation}
\mathbf{Free\ space} = \left\{(x,y,z) \in \mathbf{Environment} \ \vert \ z > M(x,y)\right\}.
\end{equation}
\end{subequations}

\bigskip

To ensure that the system remains at a distance of $\epsilon + \delta$ from the obstacle space, we expand the elevation map $M$ into an expanded elevation map $M_E$. This is done by taking each point of the surface formed by the elevation map in three dimensional space and translating it in the local upward normal direction by a distance of $\epsilon + \delta$. If we define $\mathbf{S_M}$ and $\mathbf{Exp(S_M)}$ as respectively the three dimensional surface of the elevation map and its expanded version, we can write

\begin{equation}
    \mathbf{Exp(S_M)} = \left\{(x,y,z) + (\epsilon + \delta)\hat{\mathbf{n}}_{\mathbf{S_M}}(x,y) \ \vert \ (x,y,z) \in \mathbf{S_M} \right\},
\end{equation}
where $\hat{\mathbf{n}}_{\mathbf{S_M}}(x,y)$ is the unit normal of $\mathbf{S_M}$ at $(x,y)$ such that:
\begin{equation}
    \hat{\mathbf{n}}_{\mathbf{S_M}}(x,y) = \frac{1}{\left\lVert \begin{bmatrix}-\frac{\partial M}{\partial x} & -\frac{\partial M}{\partial y} & 1 \end{bmatrix} \right\rVert}\begin{bmatrix}-\frac{\partial M}{\partial x} & -\frac{\partial M}{\partial y} & 1 \end{bmatrix}.
\end{equation}

The expanded elevation map $M_E$ is defined as the highest point of the expanded elevation surface $\mathbf{Exp(S_M)}$:
\begin{equation}
M_E : (x,y) \rightarrow max\left\{z \ \vert \ (x,y,z) \in \mathbf{Exp(S_M)} \right\}.
\end{equation}
Note that because of this, $\mathbf{Exp(S_M)}$ is not necessarily the surface $\mathbf{S_{M_E}}$ formed by $M_E$ in three dimensional space.

\bigskip

In a similar fashion to $M$, $M_E$ splits $\mathbf{Environment}$ into $\mathbf{Restricted\ free\ space}$ above it and $\mathbf{Expanded\ obstacle\ space}$ under it:

\begin{subequations}
\begin{equation}
\mathbf{Expanded\ obstacle\ space} = \left\{(x,y,z) \in \mathbf{Environment} \ \vert \ z < M_E(x,y)\right\},
\end{equation}
\begin{equation}
\mathbf{Restricted\ free\ space} = \left\{(x,y,z) \in \mathbf{Environment} \ \vert \ z > M_E(x,y)\right\}.
\end{equation}
\end{subequations}

\subsection{Quadcopter-payload system (QPS) model}
\label{sec:Quadcopter-payload system model}

\begin{figure}[ht]
\centering
\includegraphics[width=4in]{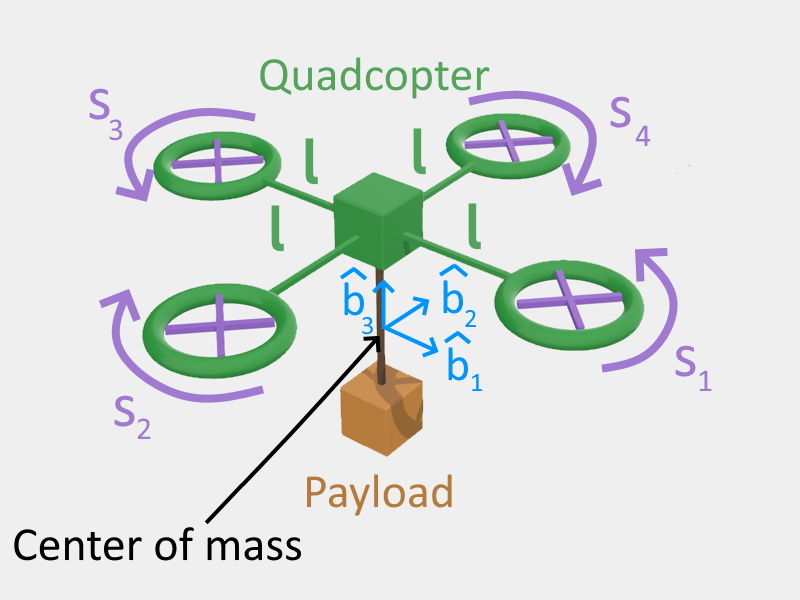}
\caption{QPS model with rotor angular speeds $s_1$ through $s_4$, rotor arm length $l$, and frame $(\hat{\mathbf{b}}_1,\hat{\mathbf{b}}_2,\hat{\mathbf{b}}_3)$}
\label{quad_pic}
\end{figure}

\begin{figure}[ht]
\centering
\includegraphics[width=4   in]{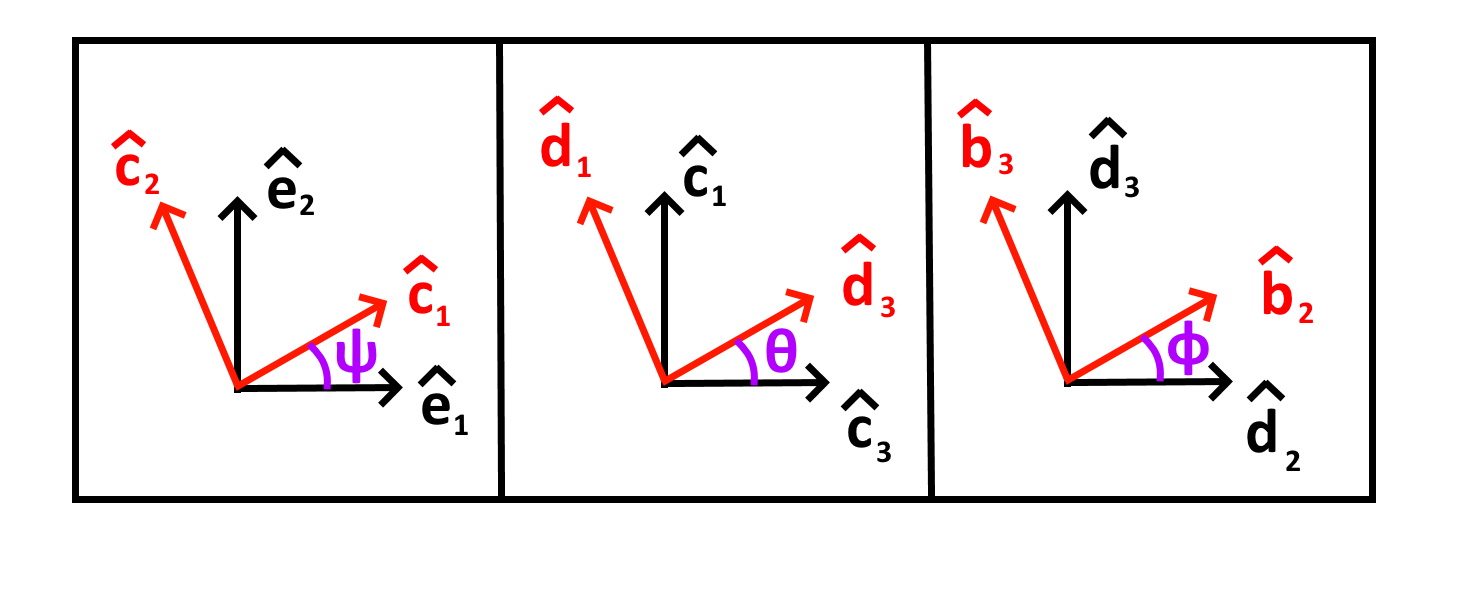}
\caption{$3-2-1$ standard for rotation with $\phi$, $\theta$, and $\psi$ as the roll, pitch, and yaw angles}
\label{rot_pic}
\end{figure}

We use the kinematics and dynamics presented in Sections \ref{sec:kinematics} and \ref{sec:dynamics} to model the motion of the QPS in an urban area.
\subsubsection{Kinematics}
\label{sec:kinematics}
To characterize the rotation of the QPS, we fix a body frame at the center of mass of the QPS with the schematic shown in Fig. \ref{quad_pic} ,  and apply the $3-2-1$ Euler angle standard as shown in Fig. \ref{rot_pic} to characterize the rotation of the QPS with respect to the inertial coordinate system which is specified by matrix
 \[
\mathbf{S}\left(\phi,\theta,\psi\right)= \begin{bmatrix}
    \cos{\theta} \cos{\psi}&\cos{\theta} \sin{\psi} &-\sin{\theta}\\
  \sin{\phi}\sin{\theta} \cos{\psi}-\cos{\phi}\sin{\psi}&\sin{\phi}\sin{\theta} \sin{\psi}+\cos{\phi}\cos{\psi}&\sin{\phi}\cos{\theta}\\
  \cos{\phi}\sin{\theta} \cos{\psi}+\sin{\phi}\sin{\psi} &\cos{\phi}\sin{\theta} \sin{\psi}-\sin{\phi}\cos{\psi}&\cos{\phi}\cos{\theta}
\end{bmatrix}
,
\]
where $\phi$, $\theta$, and $\psi$ are the roll, pitch, and yaw angles. The QPS body axes are denoted by $\hat{\mathbf{b}}_1$, $\hat{\mathbf{b}}_2$, and $\hat{\mathbf{b}}_3$ and related to the base vectors of the inertial coordinate system by
 \begin{equation}
\hat{\mathbf{b}}_{h}(t)=\mathbf{S}^T\left(\phi,\theta,\psi\right)
\hat{\mathbf{e}}_h,\qquad ~h=1,2,3.
\end{equation}
To obtain the angular velocity and angular acceleration of the QPS, we define two intermediate coordinate transformations with bases $\left(\hat{\mathbf{c}}_1,\hat{\mathbf{c}}_2,\hat{\mathbf{c}}_3\right)$ and $\left(\hat{\mathbf{d}}_1,\hat{\mathbf{d}}_2,\hat{\mathbf{d}}_3\right)$ that are defined as follows:
\begin{subequations}
\begin{equation}
    \hat{\mathbf{c}}_{h}(t)=\mathbf{S}^T\left(0,0,\psi\right)
\hat{\mathbf{e}}_h,\qquad ~h=1,2,3.
\end{equation}
\begin{equation}
    \hat{\mathbf{d}}_{h}(t)=\mathbf{S}^T\left(0,\theta,\psi\right)
\hat{\mathbf{e}}_h,\qquad ~h=1,2,3.
\end{equation}
\end{subequations}
The angular velocity and acceleration of the QPS are then obtained as follows \cite{rastgoftar2021safe}:
\begin{subequations}
\begin{equation}
    {\boldsymbol{\omega}}=\dot{\psi}\hat{\mathbf{c}}_3+\dot{\theta}\hat{\mathbf{d}}_{2}+\dot{\phi}_i\hat{\mathbf{b}}_1,
\end{equation}
\begin{equation}
    \dot{{\boldsymbol{\omega}}}= \ddot{\psi}\hat{\mathbf{c}}_3+\ddot{\theta}\hat{\mathbf{d}}_{2}+\ddot{\phi}_i\hat{\mathbf{b}}_1
     + \dot{\theta}\dot{\psi}\hat{\mathbf{c}}_3\times \hat{\mathbf{d}}_2+\dot{\phi}\left(\dot{\psi}\hat{\mathbf{c}}_3+\dot{\theta}\hat{\mathbf{d}}_2 \right)\times \hat{\mathbf{b}}_1
\end{equation}
\end{subequations}

\subsubsection{Dynamics} 
\label{sec:dynamics}
 By applying the Newton's second law, the translational dynamics of the QPS is obtained by
 \begin{equation}
     m\ddot{\mathbf{r}}=p\hat{\mathbf{b}}_3-mg\hat{\mathbf{e}}_3
 \end{equation}
 where $m$ is the total mass of the QPS, $p$ is the magnitude of the thrust force generated by the rotors, and $g=9.81m/s^2$ is the gravitational acceleration.
To obtain the rotational dynamics, we notice that the QPS is symmetrically distributed around the $\hat{\mathbf{b}}_3$ axis and is symmetric with respect to the $\hat{\mathbf{b}}_3-\hat{\mathbf{b}}_1$ and $\hat{\mathbf{b}}_2-\hat{\mathbf{b}}_3$ planes. 
Therefore, the mass moment of inertia of the QPS is diagonal and positive definite and denoted by $\mathbf{J}$ when it is realized with respect to the QPS body frame. The rotational dynamics of the QPS is then obtained by
\begin{equation}
    \mathbf{J}\dot{{\boldsymbol{\omega}}}=-{\boldsymbol{\omega}}\times \mathbf{J}{\boldsymbol{\omega}}+\boldsymbol{\tau},
\end{equation}
where $\boldsymbol{\tau}$ is the control torque expressed with respect to the inertial coordinate system.
\paragraph{Rotor Angular Speeds:}
To obtain the rotor angular speeds, we first express control torque vector $\boldsymbol{\tau}$ with respect to the QPS body frame as follows:
\begin{equation}
    \boldsymbol{\tau}_B=\mathbf{S}\left(\phi,\theta,\psi\right)\boldsymbol{\tau},
\end{equation}
where $\boldsymbol{\tau}_B=\begin{bmatrix}\tau_{\phi}&\tau_{\theta}&\tau_{\psi}\end{bmatrix}^T$. Then, based on Fig. \ref{quad_pic},  squares of rotor angular speeds, denoted by $s_1^2$,  $s_2^2$, $s_3^2$, and $s_4^2$,  can be related to $p$, $\tau_\phi$, $\tau_\theta$, and $\tau_\psi$ with \cite{}
\begin{equation}
    \begin{bmatrix}s_{1}^2\\s_{2}^2\\s_{3}^2\\s_{4}^2\end{bmatrix} = \begin{bmatrix}b&b&b&b\\0&-bl&0&bl\\-bl&0&bl&0\\-k&k&-k&k\end{bmatrix}^{-1}\begin{bmatrix}p\\\tau_\phi\\\tau_\theta\\\tau_\psi\end{bmatrix}
\end{equation}
where $b>0$ and $k>0$ are aerodynamics coefficients and $l>0$ is the length of the quadcopter arm.

\section{Control}
\label{sec:control}
By extending the translational and rotational dynamics of the quadcopter, the motion of the quadcopter can be modeled by
\begin{equation}
\label{generalnonlineardynamics}
\begin{cases}
    \dot{\mathbf{x}}=\mathbf{f}\left(\mathbf{x}\right)+\mathbf{G}\mathbf{u}\\
    \mathbf{y}=\begin{bmatrix}
    x&y&z&\psi
    \end{bmatrix}^T
    \end{cases}
    ,
\end{equation}
where $\mathbf{y}$ is the output vector, and 
\begin{subequations}
\begin{equation}
    \mathbf{x}=\begin{bmatrix}
x&y&z&\dot{x}&\dot{y}&\dot{z}&\phi&\theta&\psi&\dot{\phi}&\dot{\theta}&\dot{\psi}&p&\dot{p}
\end{bmatrix}
^T,
\end{equation}
\begin{equation}
    \mathbf{u}=\begin{bmatrix}
u_{1}&u_{2}&u_{3}&u_{4}
\end{bmatrix}
^T=\begin{bmatrix}
\ddot{p}&\ddot{\phi}&\ddot{\theta}&\ddot{\psi}
\end{bmatrix}^T,
\end{equation}
\end{subequations}
are the state vector and the control input of the quadcopter respectively. In \eqref{generalnonlineardynamics},
\begin{subequations}
\begin{equation}
\mathbf{f}\left(\mathbf{x}\right)=\begin{bmatrix}\dot{x}&
    \dot{y}&
    \dot{z}&
    \left({\frac{p}{m}\hat{\mathbf{b}}_3-{g}\hat{\mathbf{e}}_3}\right)^T&
    \dot{\phi}&
    \dot{\theta}&
    \dot{\psi}&
    0&
    0&
    0&
    \dot{p}&
    0
    \end{bmatrix}^T
,
\end{equation}
\begin{equation}
    \mathbf{G}=
\begin{bmatrix}
\mathbf{0}_{9\times 1}&\mathbf{0}_{9\times 3}\\
\mathbf{0}_{3\times 1}&\mathbf{I}_3\\
0&\mathbf{0}_{1\times 3}\\
1&\mathbf{0}_{1\times 3}\\
\end{bmatrix}
,
\end{equation}
\end{subequations}
are smooth functions obtained in Ref. \cite{rastgoftar2021safe}, where
$\mathbf{I}_3\in \mathbb{R}^{3\times 3}$ is the identity matrix, $\mathbf{0}_{3\times 1}\in \mathbb{R}^{3\times 1}$, $\mathbf{0}_{3\times 3}\in \mathbb{R}^{3\times {3}}${, and $\mathbf{0}_{3\times 9}\in \mathbb{R}^{3\times {9}}$} are the zero-entry matrices.
We use the input-state feedback linearization approach, presented in \cite{rastgoftar2021safe}, for low-level trajectory tracking. To this end, we use state transformation $\mathbf{z}=\mathbf{z}\left(\mathbf{x}\right)$
\begin{equation}
\mathbf{z}(\mathbf{x})=\begin{bmatrix}x&y&z&\dot{x}&\dot{y}&\dot{z}&\ddot{x}&\ddot{y}&\ddot{z}&\dddot{x}&\dddot{y}&\dddot{z}&\psi&\dot{\psi}\end{bmatrix}^T.
\end{equation}
\begin{proposition}
There is a one-to-one transformation between $\mathbf{x}$ and $\mathbf{z}$.
\end{proposition}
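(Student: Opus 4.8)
The plan is to exploit the fact that eight of the fourteen coordinates are literally shared between $\mathbf{x}$ and $\mathbf{z}$. Indeed, the first six entries $(x,y,z,\dot x,\dot y,\dot z)$ coincide, and the yaw pair $(\psi,\dot\psi)$ appears verbatim in both vectors. After reordering, the map $\mathbf{z}(\mathbf{x})$ is therefore the identity on these eight coordinates, so proving bijectivity reduces to showing that the remaining map sending $(\phi,\theta,\dot\phi,\dot\theta,p,\dot p)$ to $(\ddot x,\ddot y,\ddot z,\dddot x,\dddot y,\dddot z)$, with $(\psi,\dot\psi)$ treated as known parameters, is one-to-one. I would organize this inversion in two levels: an acceleration level recovering $(\phi,\theta,p)$, and a jerk level recovering $(\dot\phi,\dot\theta,\dot p)$.

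For the acceleration level, I would start from the translational dynamics $m\ddot{\mathbf r}=p\hat{\mathbf b}_3-mg\hat{\mathbf e}_3$, rewritten as $\mathbf a := \ddot{\mathbf r}+g\hat{\mathbf e}_3=\tfrac{p}{m}\hat{\mathbf b}_3$. Because $\hat{\mathbf b}_3$ is a unit vector and $p>0$ is the thrust magnitude, this immediately yields $p=m\lVert\mathbf a\rVert$ and $\hat{\mathbf b}_3=\mathbf a/\lVert\mathbf a\rVert$. Writing $\hat{\mathbf b}_3=[b_1,b_2,b_3]^T$ via the third row of $\mathbf S(\phi,\theta,\psi)$ and forming the combinations $b_1\sin\psi-b_2\cos\psi$ and $b_1\cos\psi+b_2\sin\psi$, the yaw dependence cancels and I obtain $\sin\phi=b_1\sin\psi-b_2\cos\psi$ together with $\cos\phi\sin\theta=b_1\cos\psi+b_2\sin\psi$ and $\cos\phi\cos\theta=b_3$; these determine $(\phi,\theta)$ uniquely provided $\cos\phi\neq0$. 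This establishes the acceleration-level bijection on the domain where the thrust is positive and $\phi\in(-\pi/2,\pi/2)$.

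For the jerk level, I would differentiate the translational dynamics once to get $m\dddot{\mathbf r}=\dot p\,\hat{\mathbf b}_3+p\,(\boldsymbol\omega\times\hat{\mathbf b}_3)$ and substitute $\boldsymbol\omega=\dot\psi\hat{\mathbf c}_3+\dot\theta\hat{\mathbf d}_2+\dot\phi\hat{\mathbf b}_1$. Since $(\phi,\theta,p,\psi,\dot\psi)$ are already known from the shared coordinates and the acceleration level, moving the known term $p\dot\psi\,(\hat{\mathbf c}_3\times\hat{\mathbf b}_3)$ to the left turns this into a linear system $A\,[\dot p,\ \dot\phi,\ \dot\theta]^T=m\dddot{\mathbf r}-p\dot\psi(\hat{\mathbf c}_3\times\hat{\mathbf b}_3)$ whose columns are $\hat{\mathbf b}_3$, $p\,(\hat{\mathbf b}_1\times\hat{\mathbf b}_3)$, and $p\,(\hat{\mathbf d}_2\times\hat{\mathbf b}_3)$. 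Using $\hat{\mathbf b}_1\times\hat{\mathbf b}_3=-\hat{\mathbf b}_2$ and $\hat{\mathbf d}_2=\cos\phi\,\hat{\mathbf b}_2-\sin\phi\,\hat{\mathbf b}_3$ (which follows from the roll rotation relating the $\hat{\mathbf d}$ and body frames), the third column collapses to $p\cos\phi\,\hat{\mathbf b}_1$, so evaluating $\det A$ in the orthonormal body frame gives $\det A=\pm p^2\cos\phi$. This is nonzero under exactly the same conditions as before, so the system is invertible and $(\dot p,\dot\phi,\dot\theta)$ are recovered uniquely.

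I expect the determinant computation at the jerk level to be the crux: one must carry out the cross products in $\boldsymbol\omega\times\hat{\mathbf b}_3$ correctly and re-express $\hat{\mathbf d}_2$ in the body frame to see that the coefficient matrix reduces to the clean form $\det A=\pm p^2\cos\phi$. Once both levels are inverted, the forward map $\mathbf z(\mathbf x)$ and the constructed inverse are smooth on the physically meaningful domain $\{p>0,\ |\phi|<\pi/2\}$, which proves that the transformation is one-to-one there. I would close by stating the positive-thrust and no-gimbal-lock assumptions explicitly, since the bijection fails precisely where $p=0$ or $\cos\phi=0$.
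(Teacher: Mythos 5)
Your proof follows essentially the same route as the paper's: both split off the eight shared coordinates, recover $(p,\phi,\theta)$ from the translational dynamics $m\ddot{\mathbf r}=p\hat{\mathbf b}_3-mg\hat{\mathbf e}_3$, and recover $(\dot p,\dot\phi,\dot\theta)$ by inverting the same $3\times 3$ linear system obtained from the jerk equation. Your version is in fact slightly more careful than the paper's, since you explicitly verify invertibility ($\det A=\pm p^{2}\cos\phi$) and state the domain restrictions $p>0$, $|\phi|<\pi/2$, which the paper leaves implicit when it writes down the matrix inverse without justification.
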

\begin{proof}
Note that $x$, $y$, $z$, $\dot{x}$, $\dot{y}$, $\dot{z}$, $\psi$, and $\dot{\psi}$ are the components of vectors $\mathbf{x}$ and $\mathbf{z}$, and the remaining components of $\mathbf{z}$, $\ddot{x}$, $\ddot{y}$, $\ddot{z}$, $\dddot{x}$, $\dddot{y}$, and $\dddot{z}$, can be obtained based on components of state vector $\mathbf{x}$ as follows:
\begin{equation}
    \begin{bmatrix}
        \ddot{x}&
        \ddot{y}&
        \ddot{z}&
        \dddot{x}&
        \dddot{y}&
        \dddot{z}
    \end{bmatrix}^T
    =\mathbf{h}\left(p,\phi,\theta,\psi,\dot{p},\dot{\phi},\dot{\theta},\dot{\psi}\right)=
    \frac{1}{m}
    \begin{bmatrix}
    -{mg}\hat{\mathbf{e}}_3+{p}\hat{\mathbf{b}}_3\\
    {\dot{p}}\hat{\mathbf{b}}_3+p\left({\boldsymbol{\omega}}\times \hat{\mathbf{b}}_3\right) 
    \end{bmatrix}
    .
\end{equation}
On the other hand, $\psi$,   $p$, $\phi$, and $\theta$ can be obtained based on $\ddot{x}$, $\ddot{y}$, and $\ddot{z}$. Also, by taking time derivative from acceleration vector $\ddot{\mathbf{r}}$, we can write
\[
    \dddot{\mathbf{r}}=\frac{1}{m}\dot{p}\hat{\mathbf{b}}_3+\frac{p}{m}\left({\boldsymbol{\omega}}\times \hat{\mathbf{b}}_3\right)  =\begin{bmatrix}\frac{1}{m}\hat{\mathbf{b}}_3&
     -\frac{p}{m}\hat{\mathbf{b}}_2&\frac{p}{m}\hat{\mathbf{d}}_2\times \hat{\mathbf{b}}_3
    \end{bmatrix}
    \begin{bmatrix}\dot{p}&\dot{\phi}&\dot{\theta}\end{bmatrix}^T
   +\frac{p}{m}\dot{\psi}\hat{\mathbf{c}}_3\times \hat{\mathbf{b}}_3
\]
Therefore,  $\dot{p}$, $\dot{\phi}$, and $\dot{\theta}$ are obtained by
\begin{equation}
    \begin{bmatrix}\dot{p}&\dot{\phi}&\dot{\theta}\end{bmatrix}^T=\begin{bmatrix}\frac{1}{m}\hat{\mathbf{b}}_3&
     -\frac{p}{m}\hat{\mathbf{b}}_2&\frac{p}{m}\hat{\mathbf{d}}_2\times \hat{\mathbf{b}}_3
    \end{bmatrix}^{-1}
   \left(\begin{bmatrix}
     \dddot{x}&
        \dddot{y}&
        \dddot{z}
    \end{bmatrix}^T-\frac{p}{m}\dot{\psi}\hat{\mathbf{c}}_3\times \hat{\mathbf{b}}_3\right)
\end{equation}

\end{proof}
Note that $\mathbf{z}$ is updated by
\begin{equation}
    \dot{\mathbf{z}}=\mathbf{A}\mathbf{z}+\mathbf{B}\mathbf{v}
\end{equation}
with
\begin{subequations}
\begin{equation}
    \mathbf{A}=\begin{bmatrix}
    \mathbf{0}_{9\times 3}&\mathbf{I}_9&\mathbf{0}_{9\times 1}&\mathbf{0}_{9\times 1}\\
    \mathbf{0}_{3\times 3}&\mathbf{0}_{3\times 9}&\mathbf{0}_{3\times 1}&\mathbf{0}_{3\times 1}\\
    \mathbf{0}_{1\times 3}&\mathbf{0}_{1\times 9}&0&1\\
    \mathbf{0}_{1\times 3}&\mathbf{0}_{1\times 9}&0&0\\
    \end{bmatrix}
    ,
   \end{equation}
    \begin{equation}
    \mathbf{B}=\begin{bmatrix}
    \mathbf{0}_{9\times 3}&\mathbf{0}_{9\times 1}\\
    \mathbf{I}_{3}&\mathbf{0}_{3\times 1}\\
    \mathbf{0}_{1\times 3}&0\\
    \mathbf{0}_{1\times 3}&1\\
    \end{bmatrix}
    ,
\end{equation}
\end{subequations}
where $\mathbf{v}=\begin{bmatrix}\ddddot{x}&\ddddot{y}&\ddddot{z}&\ddot{\psi}\end{bmatrix}^T$ is related to $\mathbf{u}$ by
\begin{equation}
    \mathbf{v}=\mathbf{M}\mathbf{u}+\mathbf{N},
\end{equation}
where
\begin{subequations}
\begin{equation}
    \mathbf{M}=\left[\begin{array}{ccc|c}
    \frac{1}{m}\hat{\mathbf{b}}_3&-\frac{p}{m}\hat{\mathbf{b}}_{2}&\frac{p}{m}\hat{\mathbf{d}}_2\times \hat{\mathbf{b}}_3& \frac{p}{m}\hat{\mathbf{c}}_{3}\times\hat{\mathbf{b}}_3 \\
    \hline
    0&0&0&1
    \end{array}\right]
    ,
\end{equation}
\begin{equation}
    \mathbf{N}=\dot{\theta}\dot{\psi}\left(\hat{\mathbf{c}}_3\times \hat{\mathbf{d}}_2 \right)+\dot{\phi}\left(\dot{\psi}\hat{\mathbf{c}}_3+\dot{\theta}\hat{\mathbf{d}}_2\right)\times \hat{\mathbf{b}}_1
    .
\end{equation}
\end{subequations}

\textbf{Trajectory Control Design:} We choose 
\begin{equation}
\mathbf{v}=\mathbf{K}\left(\mathbf{z}_d-\mathbf{z}\right)
\end{equation}
where $\mathbf{K}$ is the control gain matrix and  $\mathbf{z}_d(t)=\begin{bmatrix}
 \mathbf{p}^T(t)&\dot{\mathbf{p}}^T(t)&\ddot{\mathbf{p}}^T(t)&\dddot{\mathbf{p}}^T(t)&\psi_d(t)&\dot{\psi}_d(t)
\end{bmatrix}^T$ is the desired state at time $t$ (see Figure \ref{block}). Without loss of generality, we choose $\psi_d(t)=0$ and $\dot{\psi}_d(t)=0$ at any time $t$. We propose a spatiotemporal approach in Section \ref{sec:planning} to obtain desired trajectory $\mathbf{p}\left(t\right)$ that is differentiable and bounded at any time $t$. 
The state vector $\mathbf{z}$ is updated by
\begin{equation}\label{StableExternal}
    \dot{\mathbf{z}}=\left(\mathbf{A}-\mathbf{B}\mathbf{K}\right)\mathbf{z}+\mathbf{K}\mathbf{z}_d
\end{equation}
We choose control gain matrix $\mathbf{K}$ such that matrix $\mathbf{A}-\mathbf{B}\mathbf{K}$ is Hurwitz. Then, the dynamics \eqref{StableExternal} is Bounded-Input-Bounded-Output (BIBO) stable \cite{rastgoftar2021scalable}. 
\begin{figure}[ht]
\centering
\includegraphics[width=6in]{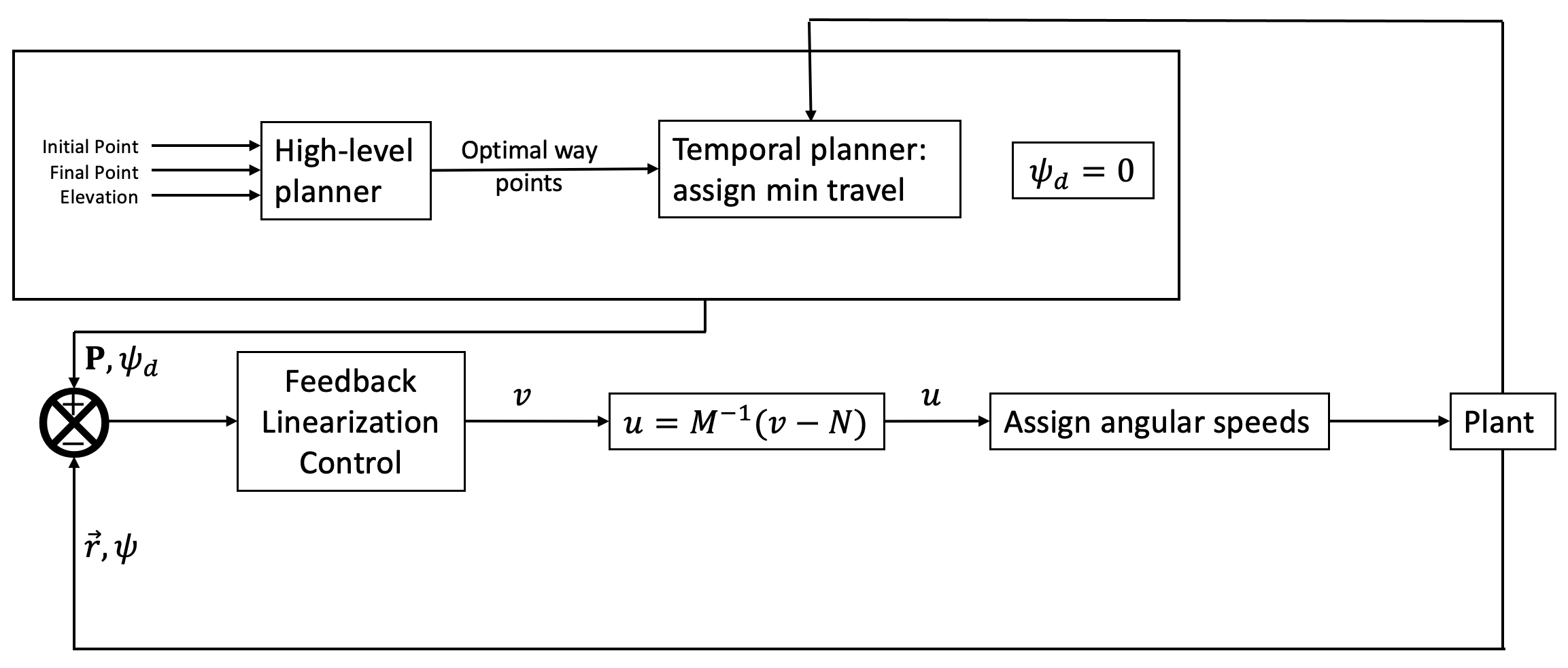}
\caption{Proposed operation for aerial payload transportation}
\label{block}
\end{figure}

\section{Planning}
\label{sec:planning}

We first implement the A* (A star) path-finding algorithm in Section \ref{Path planning} to find a piece-wise straight path which ensures that the system moves from the initial position to the final position while respecting safety condition \eqref{obstacleavoidance}. Then, we implement a bi-section algorithm in Section \ref{Time planning} to find the minimum time required for the QPS to execute each part of the path while respecting safety conditions \eqref{boundedrotor} and \eqref{boundedcontrol}.

\subsection{Spatial planning}
\label{Path planning}
To find a path from the initial position to the final position, that respects the obstacle safety condition, this paper proposes using the weighted A* algorithm  with an exploration grid that is dynamically generated.
To implement the A* search, we discretize the environment with resolution $\Delta$ through the following pairs of functions:

\begin{subequations}
\begin{equation}
\label{F}
    \mathbf{F}: (x,y,z) \rightarrow \left(\floor{\frac{x}{\Delta}+\frac{1}{2}}, \floor{\frac{y}{\Delta}+\frac{1}{2}}, \floor{\frac{z}{\Delta}+\frac{1}{2}}\right),
    \quad \forall (x,y,z) \in \mathbf{Environment},
\end{equation}
\begin{equation}
\label{F-1}
    \mathbf{F^{-1}}: (i,j,k) \rightarrow \left(\Delta \times i,\Delta \times j, \Delta \times k\right).
    \quad \forall (i,j,k) \in F\left(\mathbf{Environment}\right).
\end{equation}
\end{subequations}
Equation \eqref{F} converts continuous-valued point $\left(x,y,z\right)$ to discrete valued point $\left(i,j,k\right)$. On the other hand, Eq. \eqref{F-1} converts dicrete-valued $(i,j,k)$ to an associated discrete-valued point in the environment. 
We perform the following steps to spatially plan a safe path from the initial position to the target destination:
\begin{enumerate}
\item Convert the initial position $\mathbf{r}_i=\begin{bmatrix}x_i&y_i&z_i\end{bmatrix}^T$, final position $\mathbf{r}_f=\begin{bmatrix}x_f&y_f&z_f\end{bmatrix}^T$, and expanded elevation map $M_E$ to initial index, final index, and discrete expanded elevation map $M_E'$ using Function \eqref{F}.
\begin{subequations}
\begin{equation}
(i_1, j_1, k_1) = \mathbf{F}(x_i,y_i,z_i), \quad (i_N, j_N, k_N) = \mathbf{F}(x_f,y_f,z_f),
\end{equation}
\begin{equation}
\label{M_E'}
M_E': (i,j) \rightarrow \max\left(\left\{k\ \vert \ (i,j,k)\in \mathbf{F(Expanded\ obstacle\ space)}\right\}\right).
\end{equation}
\end{subequations}
Note that \eqref{M_E'} defines the discrete expanded elevation map as taking the maximum value in the area covered by indices i and j. And that $N \in \mathbb{N}$ is a finite free variable determined by solving the A* search.
\item Run Algorithm \ref{Astar} with resolution $\Delta$ and weight $w$. The output is a sequence of $N$ indices $\left(i_n,j_n,k_n\right)$ with $n \in \left\{1,\cdots,N\right\}$ that, when connected with segments, form a collision-free path from the initial index to the final index within the grid of resolution $\Delta$.
\item Convert back this sequence of indices to obtain a sequence of $N$ points $\bar{\mathbf{p}}_n$ using Function \eqref{F-1}. These points form a piece-wise straight path between the initial position and the final position which respects the obstacle safety condition as it is contained within the $\mathbf{Restricted\ free\ space}$. This desired trajectory $\mathbf{p}$ is parameterized with parameter $u$ such that:
\begin{subequations}
\begin{equation}
\bar{\mathbf{p}}_n = \mathbf{F^{-1}}(i_n,j_n,k_n) \quad\forall n \in \left\{1,\cdots,N\right\},
\end{equation}
\begin{equation}
\mathbf{p}(u) = 
\begin{cases}
     \bar{\mathbf{p}}_n + (u-n)\times\left(\bar{\mathbf{p}}_{n+1}-\bar{\mathbf{p}}_n\right)\quad n\leq u < n+1 \quad\forall n \in \left\{1,\cdots,(N-1)\right\}.
\end{cases}
\end{equation}
\end{subequations}
\end{enumerate}

\begin{algorithm}
    \caption{Weighted A* search algorithm with dynamically generated grid}\label{Astar}
    \begin{algorithmic}
    \Require {Initial\_index, Final\_index, $M_E'$, $\Delta$, and $w$}
    \Ensure Path is a list of connected indices from initial index to final index through $\mathbf{F(Restricted\ free\ space)}$
        \State index: [integer, integer, integer]\Comment{identifies a cell with indices i, j, and k}
        \State node: [index, real number] \Comment{is a cell with the index of the previous node}
        \State OpenSet $\gets$ HashTable<index,node> \hfill and the distance from the initial node
        \State ClosedSet $\gets$ HashTable<index,node>
        \State OpenSet[Initial\_index] $\gets$ [None, 0]
        \While {OpenSet.keys $\neq \emptyset$ $\wedge$ Final\_index $\notin$ ClosedSet.keys}
            \State PickedIndex $\gets$ min(OpenSet.keys, Index $\rightarrow$ OpenSet[Index][1] + $w \times$ distance(Index,Final\_index))
            
            \Comment{Pick the open node with the lowest total cost}
            \State Neighbors $\gets$ neighbors(PickedIndex)
            
            \Comment{Get neighbors of the picked node}
            \State Neighbors $\gets$ [Neighbor $\in$ Neighbors $\vert$ Neighbor $\notin$ ClosedSet.keys]
            
            \Comment{Filter out closed neighbors}
            \State Neighbors $\gets$ [Neighbor $\in$ Neighbors $\vert$ Neighbor[2] > $M_E'$(Neighbor)]
            
            \Comment{Filter out neighbors under the map}
            \For{Neighbor in Neighbors}
                \If{Neighbor $\notin$ OpenSet.keys}
                    \State OpenSet[Neighbor] $\gets$ [None, $\infty$]
            
                    \Comment{Add new open neighbors}
                \EndIf
                \If{OpenSet[Neighbor][1] > OpenSet[PickedIndex][1] + $w \times$ distance(Neighbor,PickedIndex)}
                    \State OpenSet[Neighbor] $\gets$ [PickedIndex, OpenSet[PickedIndex][1] + $w \times$ distance(Neighbor,PickedIndex)]
            
                    \Comment{Update already open neighbors if needed}
                \EndIf
            \EndFor
            \State ClosedSet[PickedIndex] $\gets$ OpenSet[PickedIndex]
            \State remove(OpenSet.keys, PickedIndex)
            
            \Comment{Close the explored node}
        \EndWhile
        \State Path $\gets$ List<index>
        \State Index $\gets$ Final\_index
        \State i $\gets$ 0
        \While {Index is not None}
            \State Path[i] $\gets$ Index
            \State Index $\gets$ ClosedSet[Index][0]
            \State i $\gets$ i+1
        \EndWhile
        \State Path $\gets$ reverse(Path)\Comment{Reverse the sequence such that it starts from the initial index}
        
        \State $Simpler\_path$ $\gets$ List<index>
        \State $Simpler\_path$[0] $\gets$ Path[0]
        \State j $\gets$ 0
        \For{index in Path[1:]}
            \If{Not connectable($Simpler\_path$[j], index)}
                \State j $\gets$ j+1
                \State $Simpler\_path$[j] $\gets$ index
            \EndIf
        \EndFor
        \State $\mathbf{return}$ $Simpler\_path$
        
        \Comment{Simplify the path by ignoring intermediate points if the previous and next point are directly connectable}

    \end{algorithmic}
\end{algorithm}

\subsection{Temporal planning}
\label{Time planning}
\subsubsection{Full stop condition}
To ensure that the QPS system is able to follow the desired trajectory $\mathbf{p}(t)$ at any time $t$, we impose full stop conditions at $\bar{\mathbf{p}}_1$ through $\bar{\mathbf{p}}_N$. For the controller presented in Section \ref{sec:control}, a full stop is defined as zero velocity, zero acceleration, and zero jerk because it needs these values to be continuous and defined at all times.
As such, we will be using the $\sigma_3$ activation function that satisfies the full stop requirement at initial time $0$ and final time $1$:
\begin{equation}
\sigma_3(t) = -20\times t^7 + 70\times t^6 - 84\times t^5 + 35\times t^4, \qquad \forall t \in \left[0,1\right].
\end{equation}
Such that $\sigma_3(0) = 0, \sigma_3(1) = 1, \dot{\sigma_3}(0) = 0, \dot{\sigma_3}(1) = 0, \ddot{\sigma_3}(0) = 0, \ddot{\sigma_3}(1) = 0, \dddot{\sigma_3}(0) = 0, \dddot{\sigma_3}(1) = 0$.

\bigskip

Now, we  define $t_n$ as the time of arrival at the $n$-th point $\bar{\mathbf{p}}_n$ for all n in $\left\{1,\cdots,N\right\}$. By using the activation function $\sigma_3$, the time parameterization of the trajectory becomes:

\begin{equation}
\mathbf{p}(t) = 
\begin{cases}
     \bar{\mathbf{p}}_n + \sigma_3\left(\frac{t-t_n}{t_{n+1}-t_n}\right)\times\left(\bar{\mathbf{p}}_{n+1}-\bar{\mathbf{p}}_n\right)\quad t_n\leq t < t_{n+1} \quad\forall n \in \left\{1,\cdots,(N-1)\right\}
\end{cases}
\end{equation}

\subsubsection{Travel Time minimization}
We use the bi-section method to find the minimum time required for each part of the trajectory  $\mathbf{p}$ such that it is achievable with valid rotor speeds $s_1$ through $s_4$ and bounded error $\|\mathbf{r}(t)-\mathbf{p}(t)\|$, i.e. safety conditions \eqref{boundedrotor} and \eqref{boundedcontrol} are satisfied.

For each segment and time guessed for it, we can run a simulation with the controller presented in Section \ref{sec:control} that checks if the valid rotor speed and bounded error safety conditions were violated. This process can be modeled with a simulation function $test$ defined as follows:
\begin{equation}
test(t) =
\begin{cases}
Valid&\text{if the safety conditions were not violated}\\
Invalid&\text{if the safety conditions were violated}\\
\end{cases}, \quad \forall t.
\end{equation}

The bi-section algorithm \ref{timealgorithm} works in two phases. In the first phase, it tries to find a valid maximum guess $t_{max}$ by iteratively doubling and testing an initial guess while setting the minimum guess $t_{min}$ to the previous invalid guess. Once a valid maximum guess has been found, it moves to phase two where it iteratively refines its guess range $\left[t_{min}, t_{max}\right]$. It does this by testing the midpoint of the range $t_{mid}$. If the test result is valid, the midpoint becomes the new maximum guess $t_{max}$. If the test result is invalid, the midpoint becomes the new minimum guess $t_{min}$.

\bigskip

The second phase of the bi-section algorithm keeps running till the guess range satisfies some condition. For this paper, the condition is \eqref{time condition}. Once the condition is achieved, the maximum of the guess range is picked as it is the only time in the guess range that was verified as valid.
\begin{equation}
\label{time condition}
\frac{t_{max}-t_{min}}{t_{mid}} \leq \delta_t
\end{equation}
where $\delta_t$ is the given time percentage error.

\bigskip

\begin{algorithm}
    \caption{Bi-section algorithm}\label{timealgorithm}
    \begin{algorithmic}
    \Require{initial guess $t_{max}$, test function $test$, and time percentage error $\delta_t$}
    \Ensure{$test(t_{max})$ is Valid and $t_{max}$ is within a range [$t_{min}$,$t_{max}$] that respects \eqref{time condition} while $test(t_{min})$ is Invalid}

        \State $t_{min}$ $\gets$ 0
    
        \While {$test(t_{max})$ is Invalid}
            \State $t_{min}$ $\gets$ $t_{max}$
            \State $t_{max}$ $\gets$ $2\times t_{max}$
        \EndWhile
        
        \State $t_{mid}$ $\gets$ $\frac{t_{max}+t_{min}}{2}$
        
        \While {$\frac{t_{max}-t_{min}}{t_{mid}} > \delta_t$}
            \State $t_{mid}$ $\gets$ $\frac{t_{max}+t_{min}}{2}$
            \If{$t_{mid}$ is valid}
                \State $t_{max}$ $\gets$ $t_{mid}$
            \Else
                \State $t_{min}$ $\gets$ $t_{mid}$
            \EndIf
        \EndWhile
    \end{algorithmic}
\end{algorithm}

\section{Simulation}
\label{sec:simulation}

\begin{table}
\centering
\caption{Parameters of QPS used for simulation. The quadcopter parameters are selected from \cite{romano2019experimental}.}
    \begin{tabular}{|c|c|c|}
    \hline
         Parameter&Value  &Unit\\
         \hline
         $m$&$0.5$  &$kg$\\
         $g$&  $9.81$ &$m/s^2$\\
         $l$&  $0.25$  &$m$\\
         $J_{x}$& $0.0196$ &$kg~m^2$\\
         $J_{y}$&$0.0196 $  &$kg~m^2$\\
         $J_{z}$&$0.0264$ &$kg~m^2$\\
         $b$& $3\times 10^{-5}$ &$N~s^2/rad^2$\\
         $k$&$1.1\times 10^{-6}$&$N~s^2/rad^2$\\
         \hline
    \end{tabular}
    \label{quadparameters}
\end{table}

\begin{table}
\centering
\caption{Parameters of payload used for simulation.}
    \begin{tabular}{|c|c|c|}
    \hline
         Parameter&Value  &Unit\\
         \hline
         $m$&$0.3$  &$kg$\\
         $J_{x}$&$0.005$ &$kg~m^2$\\
         $J_{y}$&$0.005$ &$kg~m^2$\\
         $J_{z}$&$0.005$ &$kg~m^2$\\
         \hline
    \end{tabular}
    \label{payloadparameters}
\end{table}

We considered a QPS modeled by kinematics and dynamics covered in Section \ref{sec:Quadcopter-payload system model} performing a payload transport mission through the University of Arizona. The quadcopter's parameters are listed in Table \ref{quadparameters} and the payload's parameters are listed in Table \ref{payloadparameters}.

\bigskip

Assuming that the payload is rigidly attached under the drone such that their centers of mass are at a distance of $d=0.2\ m$, and that both have diagonal inertia matrices, we can compute the following parameters for the QPS using the parallel axis theorem:
\begin{subequations}
\begin{equation}
m = 0.5\ kg +0.3\ kg = 0.8\ kg,
\end{equation}
\begin{equation}
d' = 0.2\ m\times\frac{0.3\ kg}{0.3\ kg+0.5\ kg} = 0.075\ m,
\end{equation}
\begin{equation}
J_x = 0.0196\ kg\ m^2+0.5\ kg\times\left(0.075\ m\right)^2 + 0.005\ kg\ m^2+0.3\ kg\times\left(0.2\ m-0.075\ m\right)^2=0.035225\ kg\ m^2,
\end{equation}
\begin{equation}
J_y = 0.0196\ kg\ m^2+0.5\ kg\times\left(0.075\ m\right)^2 + 0.005\ kg\ m^2+0.3\ kg\times\left(0.2\ m-0.075\ m\right)^2=0.035225\ kg\ m^2,
\end{equation}
\begin{equation}
J_z = 0.0264\ kg\ m^2 + 0.005\ kg\ m^2 = 0.0314\ kg\ m^2,
\end{equation}
\end{subequations}
where $d'$ is the distance between the QPS's center of mass and the quadcopter's center of mass.

\bigskip

The mission was defined with starting and ending points in WGS84 coordinates (latitude, longitude, altitude above mean sea level in meters (AMSL)). They were then converted to Universal Transverse Mercator (utm) x, y, z coordinates in meters with center of coordinates (32.2333°, -110.9504°, 0).

\bigskip

Starting point: (32.2318344°, -110.9543101°, 753)

Ending point: (32.2380538°, -110.9486297°, 758)

\bigskip

The elevation map of the University of Arizona was generated using Lidar data from USGS. Points from the Lidar point clouds were first converted to WGS84 coordinates then to utm x, y, z coordinates. Finally, the data was interpolated to create a continuous elevation map which was expanded with $\epsilon=0.65\ m$ and $\delta=0.35\ m$. The discrete version of the expanded elevation map is generated  with $\Delta = 1\ m$ (see Eq. \ref{F}). 
\begin{table}
\centering
\caption{Trajectory waypoints for payload transport mission with $(\Delta = 1\ m,\ w = 1.1)$ and utm center of coordinates (32.2333°, -110.9504°, 0) with arrival time $t_n$ computed through the bi-section algorithm with $\delta_t = 0.05$ and $s_{max}=400\ rad/s$} 
    \begin{tabular}{|c|c|c|c|c|c|}
    \hline
        Latitude&Longitude&AMSL&utm x&utm y&$t_n$ (s)\\
        \hline
        32.2318344°&-110.9543101°&753&-368&-162&0\\
        32.2320113°&-110.9540943°&756&-348&-143&9.6\\
        32.2320203°&-110.9540836°&757&-347&-142&13.0\\
        32.2320293°&-110.9540730°&758&-346&-141&16.3\\
        32.2320383°&-110.9540624°&759&-345&-140&23.1\\
        32.2330393°&-110.9528838°&759&-234&-29&48.7\\
        32.2330483°&-110.9528731°&760&-233&-28&52.0\\
        32.2336525°&-110.9521617°&762&-166&39&69.6\\
        32.2342206°&-110.9514927°&762&-103&102&88.0\\
        32.2380538°&-110.9486297°&758&166&527&150.4\\

         \hline
    \end{tabular}
    \label{waypoints}
\end{table}
\begin{figure}[ht]
\centering
\includegraphics[width=6.5   in]{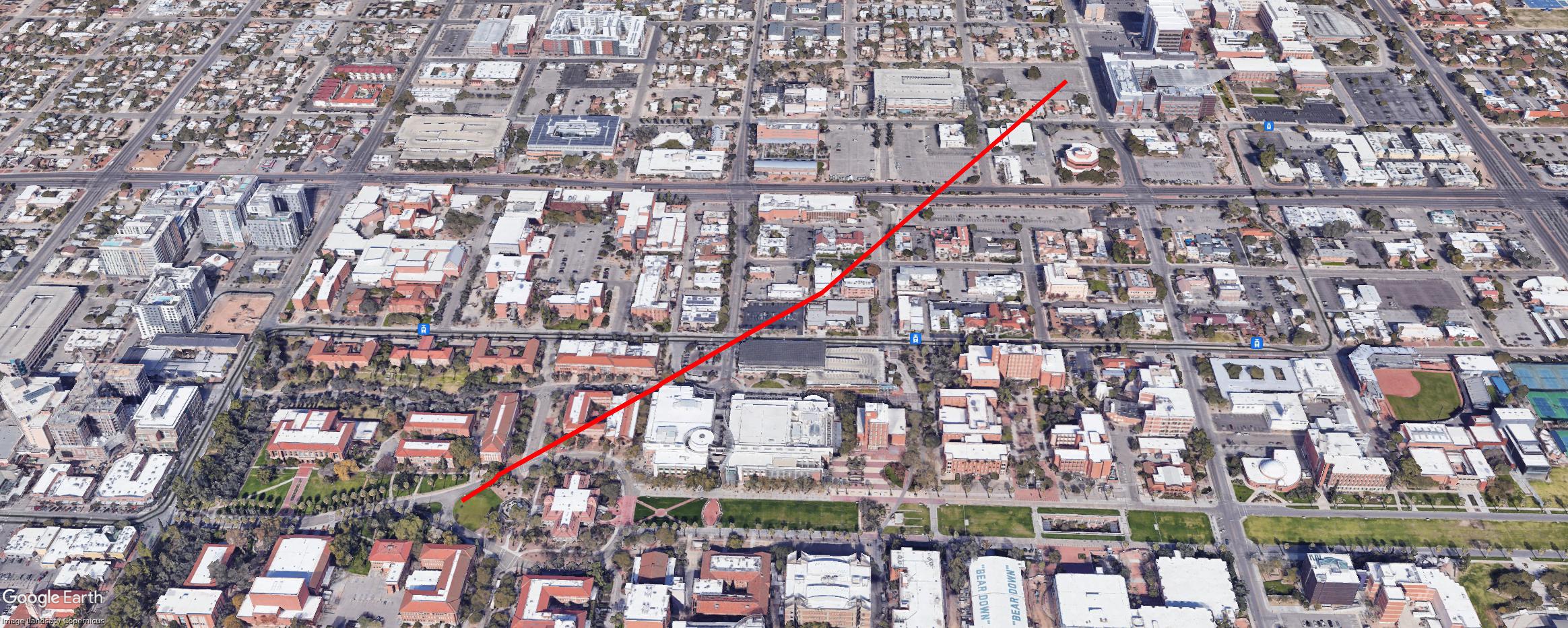}
\caption{Simulation of the aerial payload transportation in the Univeristy of Arizona campus (Optimal path for payload delivery shown by red)}
\label{trajectory}
\end{figure}
\begin{figure}
 \centering
 \includegraphics[width=4.5 in]{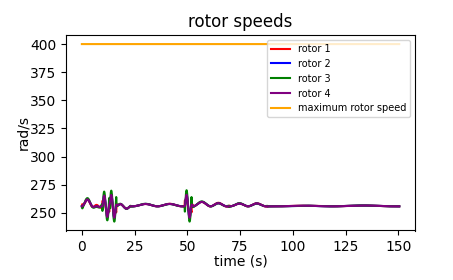}
     \caption{Angular Rotor speeds for the mission computed through the controller.}
\label{AngularSpeeds}
\end{figure}
\begin{figure}
 \centering
 \includegraphics[width=4.5 in]{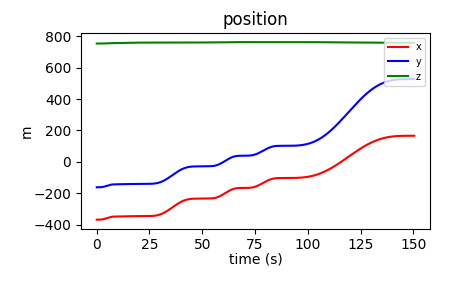}
     \caption{Position of system as a function of time during simulation.}
\label{trajplot}
\end{figure}
We ran our path-finding algorithm described in Section \ref{Path planning} with $(\Delta = 1\ m,\ w = 1.1)$ and obtained the trajectory shown in Table \ref{waypoints}. Fig. \ref{trajectory} shows that the trajectory does indeed avoids all structures and terrain. We then used our bi-section algorithm with $\delta_t = 0.05$ and $s_{max}=400\ rad/s$ to determine the minimum times required for each part of the mission which are shown in Table \ref{waypoints}. Finally, we ran our controller with a simulation which provided the motor speeds presented in Fig. \ref{AngularSpeeds} and positions presented in Fig. \ref{trajplot}.
\clearpage

\section{Conclusion}
\label{sec:conclusion}

We studied the problem of payload transportation by a single quadcopter in urban areas by considering the combined system of quadcopter and payload together as a rigid body. We used USGS  LIDAR data to generate an elevation map  for the University of Arizona. We then applied a hybrid approach, integrating a high-level motion planner with a low-level trajectory control, to safely plan a payload transportation mission assuring collision avoidance and boundedness of rotor angular speeds and trajectory tracking. In particular, the high-level motion planner combines the A* search with polynomial planning to obtain a collision-free desired trajectory minimizing travel distance from the initial position to the target destination.  We showed that the quadcopter can stably track the desired trajectory by applying the input-output feedback linearization control.


\section*{Acknowledgments}
This work has been supported by the National Science Foundation under Award Nos. 2133690 and 1914581. 
\bibliography{sample}

\end{document}